\newcommand{\fig}[1]{Fig.~\ref{#1}}
\newcommand{\eq}[1]{(\ref{#1})}
\newcommand{\alg}[1]{Alg.~\ref{#1}}
\newtheorem{thm}{Theorem}[section]
\newtheorem{cor}{Corollary}
\DeclareMathOperator*{\argmin}{argmin}   
\title{\LARGE\bf LTO: Lazy Trajectory Optimization with Graph-Search Planning for High DOF Robots in Cluttered Environments}
\author{Yuki Shirai$^{1, 2}$, Xuan Lin$^{1}$, Ankur Mehta$^{2}$, and Dennis Hong$^{1}$
\thanks{The work of Y. Shirai was partially supported by the Funai Foundation for Information Technology and the Ezoe Memorial Recruit Foundation. 
}
\thanks{$^{1}$Y. Shirai, X. Lin, and D. Hong are with the Department of Mechanical and Aerospace Engineering, University of California, Los Angeles, CA 90095, USA.
        {\tt\small yukishirai4869@g.ucla.edu}, {\tt\small maynight@ucla.edu}, {\tt\small dennishong@ucla.edu}}
\thanks{$^{2}$Y. Shirai and A. Mehta are with the Department of Electrical and Computer Engineering, University of California, Los Angeles, CA 90095, USA.
        {\tt\small yukishirai4869@g.ucla.edu}, {\tt\small mehtank@ucla.edu}
        }%
}
\begin{document}
\maketitle
\thispagestyle{empty}
\pagestyle{empty}


\begin{abstract}
Although Trajectory Optimization (TO) is one of the most powerful motion planning tools, it suffers from expensive computational complexity as a time horizon increases in cluttered environments. It can also fail to converge to a globally optimal solution.
In this paper, we present Lazy Trajectory Optimization (LTO) that unifies local short-horizon TO and global Graph-Search Planning (GSP) to generate a long-horizon global optimal trajectory. 
LTO solves TO with the same constraints as the original long-horizon TO with improved time complexity. 
We also propose a TO-aware cost function that can balance both solution cost and planning time. 
Since LTO solves many nearly identical TO in a  roadmap, it can provide an informed warm-start for TO to accelerate the planning process. We also present proofs of the computational complexity and optimality of LTO. Finally, we demonstrate LTO's performance on motion planning problems for a 2 DOF free-flying robot and a 21 DOF legged robot, showing that LTO outperforms existing algorithms in terms of its runtime and reliability.
\end{abstract}
%
%

\section{Introduction}
Trajectory Optimization (TO), such as the ones based on Mixed-Integer Convex Programming (MICP), solves a motion planning problem to generate an optimal trajectory while satisfying constraints.
One of the advantages TO has compared with other planners, such as Sampling-Based Planning (SBP) (e.g., 
Rapidly-exploring Random Tree) and Graph-Search Planning (GSP) (e.g., A*), is to easily formulate a wide variety of constraints, including equality constraints. 
Conversely, GSP and SBP take considerable time in a narrow passage because it is difficult to place a sufficient number of grids or samples to represent the states present \cite{sample_constraints}. 


\begin{figure}
    \centering
    \includegraphics[width=0.4599\textwidth, clip]{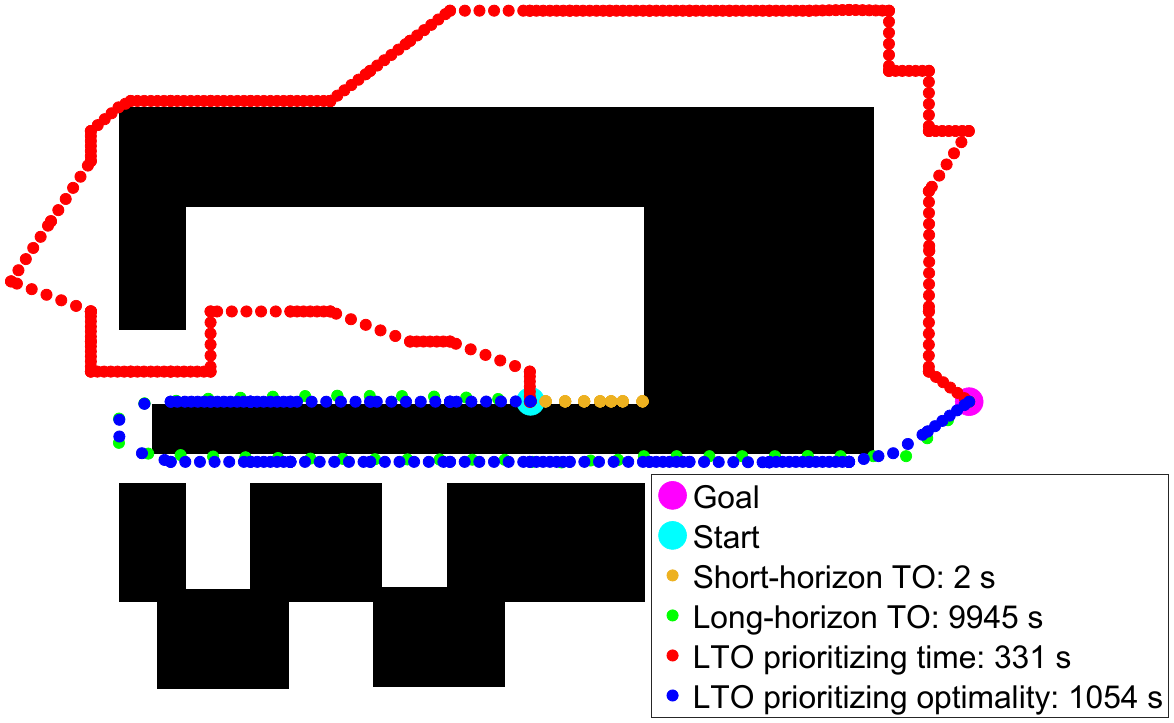}
    \caption{Generated trajectories in $\mathbb{R}^{2}$. The short-horizon TO gets stuck at the local optimum and cannot find the trajectory from start to goal. The long-horizon TO finds the optimal trajectory but takes an extended amount of time. LTO prioritizing planning time avoids the area where the time for solving TO is long due to many integer variables (i.e., obstacles). LTO prioritizing optimality of the trajectory finds the optimal solution with less planning time compared with the long-horizon TO.}
    \label{concept}
\end{figure}

However, TO has two main drawbacks: expensive computational complexity with a long horizon and convergence to local optima \cite{Risk}, \cite{IntroTO}. Long-horizon TO is indispensable for generating feasible global trajectories, but the computation time grows exponentially as the number of horizons increases. 
Model Predictive Control (MPC)
can spend less planning time than the long-horizon TO \cite{MIT_MPC}-\cite{fast_MPC}.
However, since it solves relatively short-horizon TO, 
it has a greater probability of getting stuck at local optima. 

To this end, we address Lazy Trajectory Optimization (LTO) unifying the local short-horizon TO and the global long-horizon GSP. 
LTO reasons the same constraints as the original large-horizon but with the improved time complexity.
We also propose a cost function that considers the computation time of TO to balance the optimality of the trajectory and the planning time. 
Next, based on Lazy Weighted A* (LWA*) \cite{LWA*}, we improve LWA* by making the vertex generation "lazy". In this work, "lazy" means that LTO runs TO only when it intends to evaluate the configuration and trajectories. 
Because LTO solves many similar TOs, 
it employs a warm-start to solve TO,
resulting in less planning time. By employing MICP as a short-horizon TO, we can analyze the computational complexity of LTO in addition to the bounded solution cost. 
%
Our contributions can be summarized as follows:
\begin{enumerate}
\item We propose LTO, a framework incorporating  GSP as a high-layer planner and TO as a low-layer planner, that
efficiently generates long-horizon global trajectories.
\item We present the cost function balancing the planning time of TO and the optimality of the trajectory.
\item We present the theoretic properties of LTO.
\item We demonstrate LTO's efficiency on motion planning problems of a 2 DOF free-flying robot and a 21 DOF legged robot. 
\end{enumerate}

\section{Related Work}

TO finds an optimal trajectory that satisfies constraints  \cite{Risk}-\cite{FASTER}. 
%
We focus on using MICP as a short-horizon TO for the following reasons. 
First, MICP can deal with nonlinear constraints with approximations and constraints involving discrete variables \cite{MIT_envelope}.
%
Another reason is that the solving time for MICP based on Branch and Bound (B\&B) \cite{branch}-\cite{LVIS} is bounded theoretically. 


Several GSPs have been studied to decrease the planning time \cite{LWA*}-\cite{SD}.
LWA* \cite{LWA*} evaluates edges only when the planner uses them. At the start of planning, LWA* does know the true edge cost and gives an optimistic value. During the planning process, when the state is selected for expansion, LWA* evaluates the true edge cost, resulting in the decreased planning time. The limitations of LWA* are that it does not consider the difficulty of edge evaluations using TO and it assumes that the true vertex configuration is known.

The hierarchical planners \cite{hierarchical1}-\cite{hierarchical2} have shown remarkable success. Our LTO is similar to these algorithms for decreasing the planning time considering dynamics. LTO could spend less planning time in cluttered environments since it simultaneously solves GSP and TO to avoid dead-ends while the hierarchical planners need to run expensive global planners again until they find feasible paths.

Several works have been proposed to have good warm-starts \cite{UAV_hybrid}-\cite{wheel_hybrid}. Compared with them, LTO utilizes past trajectories in graph to enhance the quality of wart-starts.

\section{Problem Formulation}\label{problem_formulation}


\subsection{Notation}

\begin{figure}
    \centering
    \includegraphics[width=0.29\textwidth]{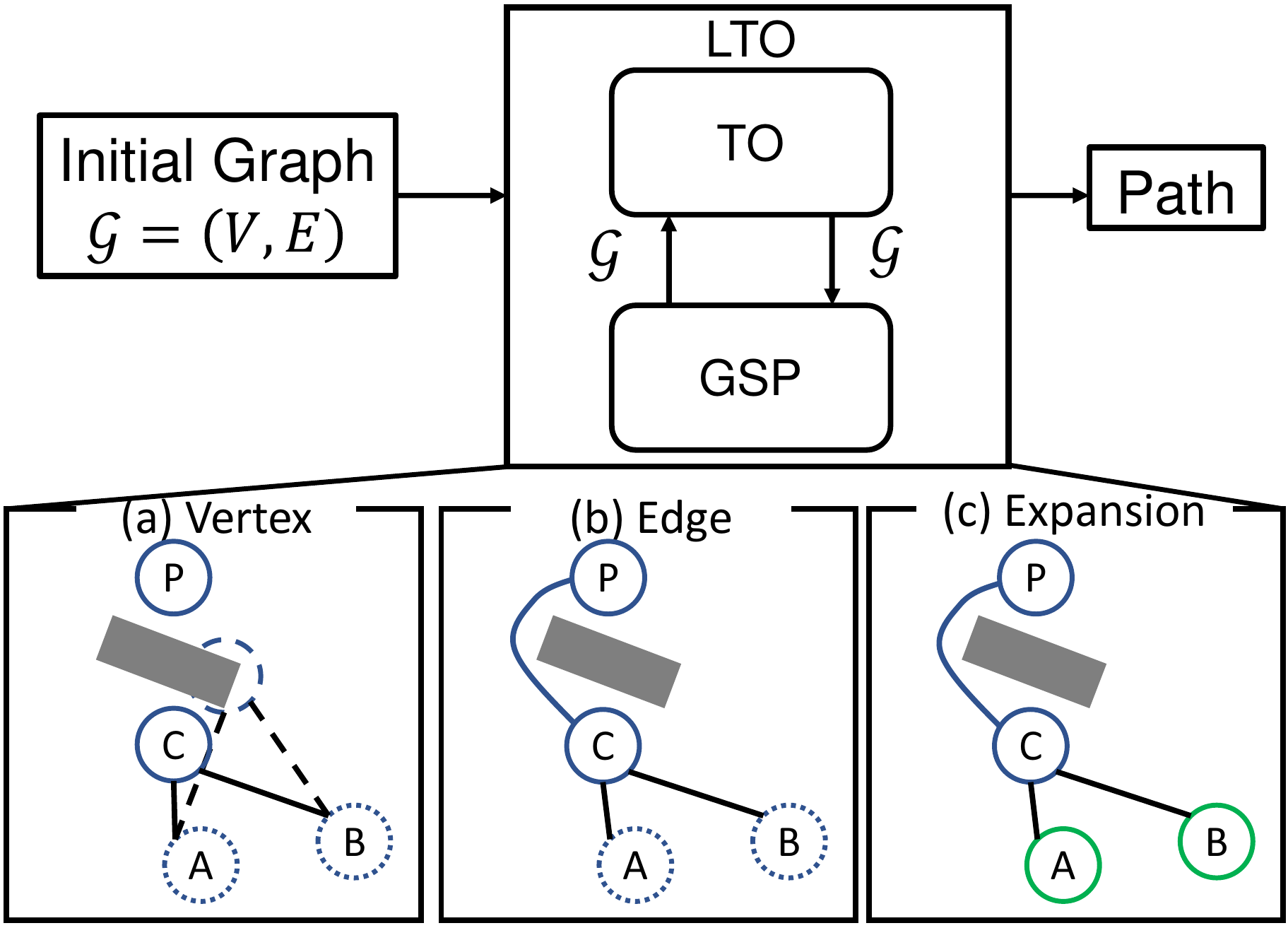}
    \caption{Overview of LTO. Given an initial graph where the true configuration of the robot and the trajectory are unknown, LTO iteratively solves TO locally to get the true vertex (configuration) and edges (trajectories) in $\mathcal{G}$. LTO performs either expansion of the vertex, gets the true vertex using TO, or gets the true edge using TO. Assume the vertex P is the current vertex LTO chooses from an open list. (a): LTO gets the true configuration of the vertex C. (b): it generates the true trajectory from the vertex P to C. (c): it expands the vertex C and inserts the vertices A and B into the open list.}
    \label{overview_fig}
\end{figure}

LTO solve TO with GSP as shown in \fig{overview_fig}. Let $\mathcal{G}=(V, E)$ be a priori unknown graph consisting of vertices $V = \left(v_1, v_2,  \ldots\right)$ and edges $E = (e(v_i \leftrightarrow v_j) \forall {i}, \forall {j})$. 
Each vertex represents the state of a robot and each edge represents the trajectory of the robot between the vertices. We make voxels in the continuous domain, such that each vertex is in each voxel shown in \fig{edge}. 
At the start of planning, we connect every two vertices if their $\ell_{\infty}$ norm is less than or equal to $r$. Let $K$ be the number of intervals along each axis. Let $i$ be the number of voxels along each axis to produce a hypercube region where LTO solves the edge (see \fig{edge}). 
%




\subsection{Graph Structure}\label{graph_structure}

\begin{figure}
    \centering
    \includegraphics[width=0.32\textwidth]{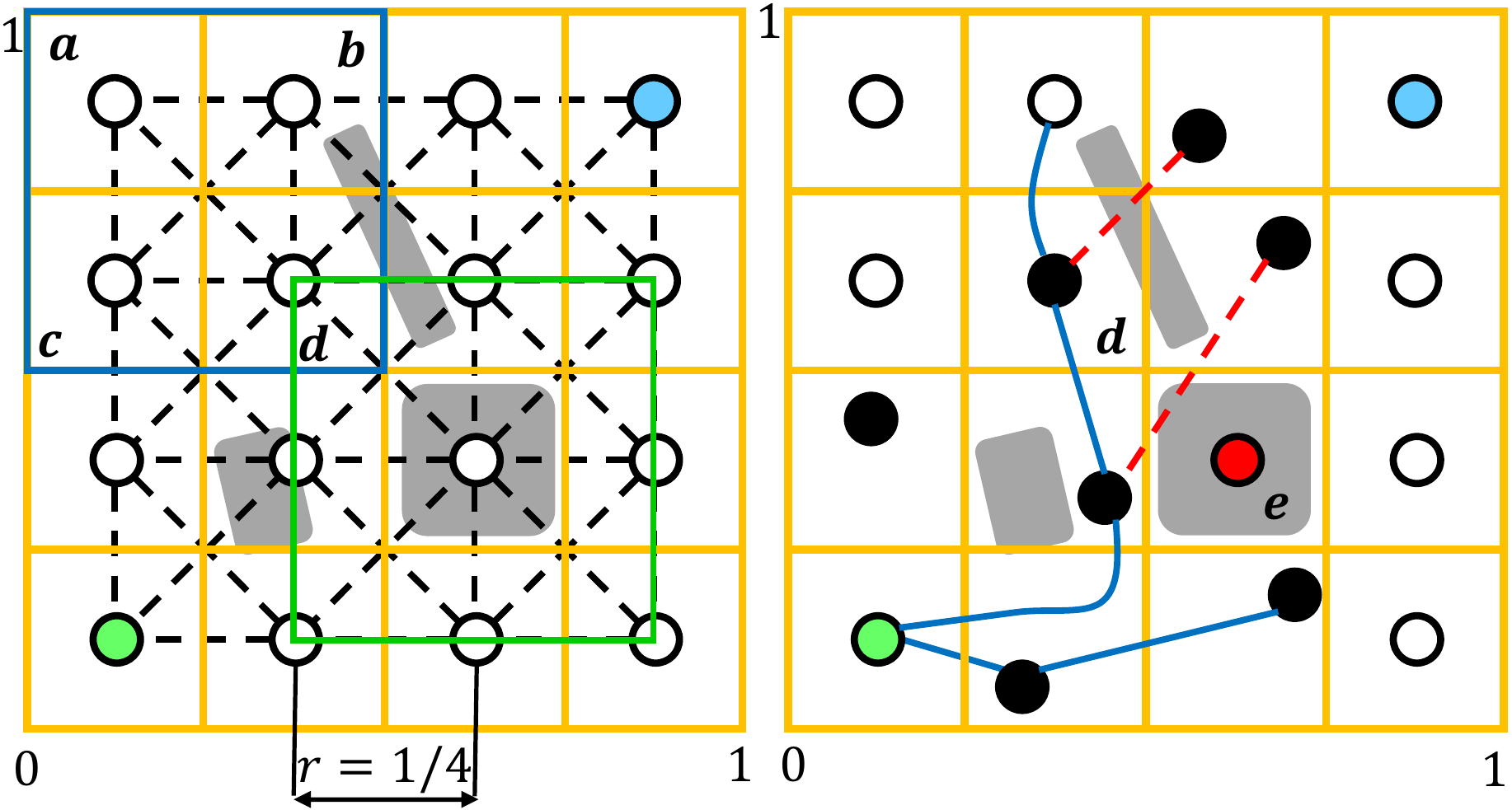}
    \caption{The planning procedure by LTO in $[0, 1]^2$ where $K=4, i=1, r = 1/4$. The left figure shows a graph at the start of planning where each vertex is inside the orange voxel and each edge is represented as a black dashed line. When LTO investigates the vertex, it solves TO within this voxel. When it investigates the edge, it solves TO within the associated voxels. For example, when it investigates the edge from the vertex (a) to (d), it solves TO in the voxels (a), (b), (c), (d), considering the constraints. The right figure shows the current graph after iterations. For simplicity, it does not show the black dashed lines. The true vertex and the true edges found by TO are shown as the black circles and blue lines. The infeasible vertex and edge judged by TO are shown as the red circle and red lines. For example, the vertex (e) is removed since TO cannot find the feasible configuration in the voxel (e).}
    \label{edge}
\end{figure}


We solve TO within associated voxels and only use the constraints within voxels as shown in \fig{edge}. It means
we remove several domain-specific constraints, such as obstacle-avoidance constraints, if they are outside the voxels, resulting in the decreased solving time. Because we keep each vertex and edge within the voxels, the constraints outside the voxels do not influence the generated vertex and edge in voxels. 
%


We describe how we build our graph. In the beginning, 
we make voxels in the continuous domain and place a vertex in each voxel. 
If we place a vertex to represent the robot's state without considering the feasibility of the state, the probability of the state associated with the vertex being infeasible would be high. If we use TO to place a vertex, TO considers constraints so that LTO can place the vertex in a feasible region. Thus, it uses TO to get the true configuration associated with the vertex and update $\mathcal{G}$. To execute TO, it associates each grid voxel with a continuous state of the robot. 
%
For edge generation,
it uses TO to get the true trajectory associated with the edge and update $\mathcal{G}$. 
LTO solves TO in the hypercube, consisting of corner points of the voxels where the target vertices are located.


\subsection{Mixed-Integer Convex Programs}\label{MICP_Formulation}
The MICP to generate edges in $\mathcal{G}$ is given by:
\begin{equation}
   \begin{array}{cl}
\text { minimize } & c_T(x_N, z)+\sum_{t=0}^{N-1}c_i(x_t, z)\\
\text { subject to } & f_i(x_t, z) \leq 0, \quad t=0, \ldots, N-1 \\
& x_{\min } \leq x_{t} \leq x_{\max }, \quad t=0, \ldots, N \\
& x_{0}=x_{s}, \quad x_{N}=x_{g}  \\
& x_{t} \in \mathcal{X}_{\text { }}, \quad t=0, \ldots, N\\
&z \in\{0,1\}^{n_{z}}
\label{MICP_standard}
\end{array}
\end{equation}
where $x_t$ are the states of the robot at time $t$, $z$ are binary decision variables, $c_T, c_i, f_i$ are convex functions, and $\mathcal{X}$ is the convex set. 
When finding the edge in $\mathcal{G}$, we solve \eq{MICP_standard}, where $x_{s}, x_{g}$ are the state of the start vertex and the state of the goal vertex, respectively. 
%
When finding the vertex in $\mathcal{G}$, we solve \eq{MICP_standard} with $N = 0$ without $\sum_{t=0}^{N-1} c_{i}\left(x_{t}, z\right)$. 

\subsection{Warm-Start Strategy}\label{warm-start-strategy}
Let $v_p$ and $v_c$ be the start and goal state in the trajectory LTO tries to generate in $\mathcal{G}$. LTO searches the most similar trajectory in $\mathcal{G}$ based on the deviation cost as follows:
\begin{equation}
    d_{\operatorname{cost}} = \left\|v_{p}-v_{i}\right\| + \left\|v_{c}-v_{j}\right\|
\label{warm_start_eq}
\end{equation}
where $v_i$ and $v_j$ are the start and goal of other trajectories in $\mathcal{G}$. We assume that a trajectory in $\mathcal{G}$ with a close start and goal designs a similar trajectory, enabling the robot to be aware of constraints.
Hence, when LTO tries to generate an edge not investigated by TO yet, it uses the edges already generated by TO as initial guesses if $d_{\operatorname{cost}}$  is lower than the threshold. 
%
As time passes, it solves more edges and this information enhances the quality of the warm-start for the current trajectory generation.

\section{Lazy Trajectory Optimization}\label{LazyARA*}
We present LTO that unifies TO and GSP.
We employ LWA* as GSP of LTO 
and improve LWA* by proposing a new cost function that considers the difficulty of TO with the guaranteed suboptimality bound. We also delay vertex validation using TO until the planner intends to expand the vertex since executing TO for all the voxels to validate the vertices is demanding. 
%
%
%
%
The high-level process of LTO is shown in \fig{overview_fig}.
Given a uniform grid graph where LTO does not know the true vertex and edge, LTO performs either an expansion, a vertex validation, or an edge generation. 

 We use the notation  $X \stackrel{+}{\leftarrow}\{\mathbf{x}\}$ and $X \stackrel{-}{\leftarrow}\{\mathbf{x}\}$ to show the compounding operations $X \leftarrow X \cup \mathbf{x}$ and $X \leftarrow X \setminus \mathbf{x}$, respectively. $Q_o, Q_c$ are priority queues to maintain the states discovered but not expanded and the expanded states.
%
 $\hat g(v)$, $\hat h(v)$, and $\hat f(v)$ are estimates of cost-to-come, cost-to-go, and cost from the start to goal through $v$, respectively. We use $\hat h(v)$ as:  $\hat h(v)=\left\|v_{i}-v_{\text{goal}}\right\|_2$.   $\operatorname{TrueVertex}$ and $\operatorname{TrueEdge}$ show if a state $v$ has the true configuration and the true edge cost. 
$\operatorname{Conf}(v)$ represents the true configuration of the vertex.

\subsection{TO-Aware Cost}
We propose the TO-aware cost as follows:
\begin{equation}
\begin{array}{l}
c\left(v_{1}, v_{2}\right)=\left\|v_{1}-v_{2}\right\|_2, 
c_{TO}\left(v_{1}, v_{2}\right)= (1+\omega ^{n_i}) c\left(v_{1}, v_{2}\right) 
\end{array}\label{cost_lazy}
\end{equation}
where $c$ is the cost of the edge using the Euclidean distance and $c_{TO}$ is the inflated cost considering the time complexity of TO. $n_i$ is the number of discrete decision variables associated with edge generation and we count $n_i$ within the associated voxels. 
$\omega$ is a user-defined inflation factor. 

%

%
$c_{TO}$ can be very large so that LTO may not enthusiastically investigate the edge in the voxels with many discrete variables. 
In other words, $\omega$ is a tuning knob that balances the optimality of a trajectory and the planning time.
We use $c_{TO}$ as the cost of the edge if it is not investigated by TO and use $c$ if it is investigated by TO. 


While many works try to minimize the number of edge evaluations \cite{LazySP}, only a few papers discuss the "difficulty" of the edge evaluation. Recognizing that the running time of the edge generation by TO grows exponentially as the number of discrete variables increases \cite{MIT_envelope}, \cite{Ponton}, we propose $c_{TO}$.



\subsection{Main Loop (\alg{alg2})}

  \begin{algorithm}[t]
  \small 
  \algsetup{linenosize=\small}
 \caption{LTO($\mathcal{G}$, $v_{\operatorname{start}}$, $v_{\operatorname{goal}}$)}
 \label{alg2}
 \begin{algorithmic}[1]
 \STATE $Q_o \leftarrow v_{\operatorname{start}}$, $Q_c \leftarrow \emptyset$, $\hat{g}\left(v_{\operatorname{start}}\right)=0$, $\hat{g}(v) \leftarrow \infty$ \label{line0}
 \STATE $\operatorname{TrueVertex}(v)\leftarrow \text{False}, \operatorname{TrueEdge}(v)\leftarrow \text{False}$
 \STATE $\operatorname{TrueVertex}(v_{\operatorname{start,goal}})\leftarrow \text{True}, \operatorname{TrueEdge}(v_{\operatorname{start}})\leftarrow \text{True}$
  \WHILE{$\hat f\left(v_{\text {goal}}\right)>\min _{v \in Q_o}(\hat f(v))$}\label{line1}
  \STATE $v = \argmin_{v \in Q_o}(\hat f(v)), Q_o \stackrel{-}{\leftarrow}\{v\} $\label{extract}
  \IF{$v == v_{\operatorname{goal}}$ }
  \RETURN ReconstructPath($v_{\operatorname{start}}$, $v_{\operatorname{goal}}$)
  \ENDIF \label{line01}
  \IF{$v \in Q_c$}\label{line3}
  \STATE CONTINUE\label{line4}
\ELSIF{$\operatorname{TrueVertex}(v)$}\label{line5}
  \IF{$\operatorname{TrueEdge}(v)$}\label{line6}
  \STATE $Q_o, Q_c=\operatorname{Expansion}(\mathcal{G}, Q_o, Q_c, v)$
  \ELSE\label{line22}
  \STATE $\mathcal{G}, Q_o, Q_c=\operatorname{UpdateEdge}(\mathcal{G}, Q_o, Q_c, v)$
  \ENDIF\label{line36}
  \ELSE\label{line37}
  \STATE $\mathcal{G}, Q_o, Q_c=\operatorname{UpdateVertex}(\mathcal{G}, Q_o, Q_c, v)$
  \ENDIF\label{line46}
  \ENDWHILE\label{line47}
  \RETURN No Path Exists
 \end{algorithmic} 
 \end{algorithm}

Lines~\ref{line0}-\ref{line01} are typical of A*. We iteratively remove the cheapest state in $Q_o$ until the goal is chosen. Lines~\ref{line3}-\ref{line4} are from LWA*, showing that a state is not expanded again if it is already expanded and continues to the next iteration of the while loop. Lines~\ref{line5}-\ref{line46} are new. $\operatorname{TrueVertex}(v)$ and $\operatorname{TrueEdge}(v)$ check if the expanded state $v$ has the true configuration and the true edge cost, respectively. 

\subsection{Expansion (\alg{expansion})}

  \begin{algorithm}[t]
    \small 
  \algsetup{linenosize=\small}
 \caption{$\operatorname{Expansion}(Q_o, Q_c, v)$}
 \label{expansion}
 \begin{algorithmic}[1]
  \STATE $Q_c \stackrel{+}{\leftarrow}\{v\}$, $S=\operatorname{GetS u c c e s s o r s}(v)$\label{line7}
  \FORALL{$v^{\prime} \in S$}\label{line9}
  \STATE parent$\left(v^{\prime}\right)=v$\label{line10}
  \IF{$\exists v^{\prime \prime} \in (Q_o \OR Q_c) \text{s.t.}$ TrueVertex($v^{\prime \prime}$) $\AND \text{Conf}(v^{\prime}) = \text{Conf}(v^{\prime \prime})$}\label{200}
  \STATE $\hat g\left(v^{\prime}\right)=\hat g\left(\text {parent}\left(v^{\prime}\right)\right)+c_{TO}\left(\text {parent}\left(v^{\prime}\right), v^{\prime}\right)$ 
  \STATE TrueVertex $(v^{\prime})=$ true \label{205}
  \ELSE
  \STATE $\hat g\left(v^{\prime}\right)=\hat g\left(\text {parent}\left(v^{\prime}\right)\right) + c_{x,v}\left(\text {parent}\left(v^{\prime}\right), v^{\prime}\right)$\label{201}
  \ENDIF 
  \IF{$ \nexists v^{\prime \prime} \in Q_o$ s.t. $\operatorname{Conf}\left(v^{\prime \prime}\right)=\operatorname{Conf}\left(v^{\prime}\right) \AND$
TrueEdge$\left(v^{\prime \prime}\right) \AND \hat g\left(v^{\prime \prime}\right) \leq \hat g\left(v^{\prime}\right) \AND v^{\prime} \notin Q_c$}\label{line13}
\STATE $\hat f\left(v^{\prime}\right)=\hat g\left(s^{\prime}\right)+ \hat h\left(v^{\prime}\right)$, $Q_o \stackrel{+}{\leftarrow}\{v^{\prime}\}$\label{line16}
\ENDIF\label{line20}
  \ENDFOR\label{line21}
  \RETURN $Q_o, Q_c$
 \end{algorithmic} 
 \end{algorithm}

In \alg{expansion}, the expanded state has both the true vertex and the true edge cost so that LTO puts all successors of $v$ in $Q_o$. GetSuccessors generates a copy of each neighboring state to maintain the states from different parents (line~\ref{line7}).
The same vertex that originated from other parent states might have already figured out the true configuration of the vertex
by already running TO. Hence, LTO checks if other versions of the successor state $v^{\prime}$ have the true configuration in $Q_o, Q_c$ (line~\ref{200}). If true, 
we update $\hat g(v^{\prime})$ with $c_{TO}$. Thus, $\mathcal{G}$ has an expensive cost for edges with many integer variables.
We also set TrueVertex$(v^{\prime})$ to true (line~\ref{205}). If another version of the successor state $v^{\prime}$ does not have the true configuration, we use a distance from $v$ to the voxel's edge where $v^{\prime}$ belongs as a cost of the edge to guarantee the bounded suboptimality (line~\ref{201}). LTO checks if this version of $v^{\prime}$ should be considered for maintaining in $Q_o$ (line~\ref{line13}). If there exists the state $v^{\prime \prime}$ that represents the same configuration of $v^{\prime}$ with the true edge cost and the lower $\hat g$ value, we do not maintain $v^{\prime}$.

\subsection{Edge generation (\alg{updateedge})}
In \alg{updateedge}, the state has the true vertex but does not obtain the true edge cost yet. 
Let $v^x_y$ and $v^z_y$ be the vertices representing the same configuration but originated from different parents $x, z$.
Here,  $e(v^{a}_b \leftrightarrow v^{c}_d)=e(v^{e}_{b} \leftrightarrow v^{f}_{d})$. Hence, we do not want to run expensive TO again to get $e(v^{a}_b \leftrightarrow v^{c}_d)$ if we already obtain $e(v^{e}_{b} \leftrightarrow v^{f}_{d})$. 
On line~\ref{300}, $\operatorname{CheckSamePair}$ checks if we already obtain the same configuration pair from different parents. If true, we get the same configuration pair (line~\ref{GetSamePair}). Line~\ref{edge_checking} checks if the obtained edge is feasible. If true,
we set $\operatorname{TrueEdge}(v)$ to true, get the cost (line~\ref{311}) and use it to update the $\hat g(v)$ (line~\ref{312}). 
We use $c$ instead of $c_{TO}$ because LTO already figures out the true edge cost, and it does not make sense for the edge cost to be expensive due to $\omega ^{n_i}$. On lines~\ref{line30}-\ref{line33}, like line~\ref{line13} in \alg{expansion}, we insert $v$ into $Q_o$ if no states exist satisfying the if condition. If false on line~\ref{300}, LTO runs TO (line~\ref{line2300}) and perform the same action between lines~\ref{311}-\ref{line33}.
On line~\ref{301}, GetWarmStart computes the initial guesses $w_{\operatorname{opt}}$ of each decision variable. 

\subsection{Vertex Validation (\alg{updatevertex})}
Since the state does not have the true vertex, LTO runs TO and gets the true configuration of $v$. The structure of \alg{updatevertex} and \alg{updateedge} is essentially the same, but in  \alg{updatevertex}, we use $c_{TO}$ as the cost to avoid the expensive edge generation (line~\ref{4000}). 

  \begin{algorithm}[t]
    \small 
  \algsetup{linenosize=\small}
 \caption{$\operatorname{UpdateEdge}(\mathcal{G}, Q_o, Q_c, v)$}
 \label{updateedge}
 \begin{algorithmic}[1]
 \IF{$\operatorname{CheckSamePair}(\mathcal{G}, \operatorname{parent}(v), v)$ is False}\label{300}
 \STATE $w_{\operatorname{opt}}=\operatorname{GetWarmStart}(\mathcal{G}, \operatorname{parent}(v), v)$\label{301}
  \STATE $c, \mathcal{G}, \text{Edge}=\operatorname{RunTO}(\mathcal{G},\operatorname{parent}(v), v, w_{\operatorname{opt}})$\label{line2300}
  \ELSE
  \STATE $c, \mathcal{G}, \text{Edge} = \operatorname{GetSamePair}(\mathcal{G}, \operatorname{parent}(v), v)$\label{GetSamePair}
  \ENDIF
 \IF{\text{Edge is feasible}} \label{edge_checking}
 \STATE TrueEdge $(v)=$ true, $c=\operatorname{CostSamePair}(\operatorname{parent}(v), v))$\label{311}
 \STATE $\hat g(v)=\hat g(\text {parent}(v))+c$\label{312}
 \IF{$ \nexists v^{\prime \prime} \in Q_o$ s.t. $\operatorname{Conf}\left(v^{\prime \prime}\right)=\operatorname{Conf}\left(v\right) \AND$
TrueEdge$\left(v^{\prime \prime}\right) \AND \hat g\left(v^{\prime \prime}\right) \leq \hat g\left(v\right)$}\label{line30}
\STATE $\hat f\left(v\right)=\hat g\left(v\right)+ \hat h\left(v\right)$, $Q_o \stackrel{+}{\leftarrow}\{v\}$
\label{line31}
\ENDIF\label{line33}
  \ENDIF\label{line35}
  \RETURN $\mathcal{G}, Q_o, Q_c$
 \end{algorithmic} 
 \end{algorithm}

  \begin{algorithm}[t]
    \small 
  \algsetup{linenosize=\small}
 \caption{$\operatorname{UpdateVertex}(\mathcal{G}, Q_o, Q_c, v)$}
 \label{updatevertex}
 \begin{algorithmic}[1]
 \IF{$\operatorname{CheckSameVertex}(\mathcal{G}, v)$ is False}\label{4400}
 \STATE $\mathcal{G},  \text{Configuration}=\operatorname{RunTO}(\mathcal{G}, v)$\label{line23}
 \ELSE
 \STATE $\mathcal{G},  \text{Configuration}=\operatorname{GetSameVertex}(\mathcal{G}, v)$
 \ENDIF
 \IF{\text{Configuration is feasible}}
 \STATE TrueVertex $(v)=$ true \label{check_vertex}
 \STATE $g(v)=g(\text {parent}(v))+c_{TO}\left(\text {parent}\left(v\right), v\right)$ \label{4000}
 \IF{$ \nexists v^{\prime \prime} \in Q_o$ s.t. $\operatorname{Conf}\left(v^{\prime \prime}\right)=\operatorname{Conf}\left(v\right) \AND$
TrueEdge$\left(v^{\prime \prime}\right) \AND g\left(v^{\prime \prime}\right) \leq g\left(v\right)$}\label{line40}
\STATE $\hat f\left(v\right)=\hat g\left(v\right)+ \hat h\left(v\right)$, $Q_o \stackrel{+}{\leftarrow}\{v\}$\label{line311}
\ENDIF\label{line433}
\ENDIF
  \RETURN $\mathcal{G}, Q_o, Q_c$
 \end{algorithmic} 
 \end{algorithm}

\section{Formal Analysis}

\subsection{Complexity}\label{CT_Proof}
We identify line~\ref{line2300} in \alg{updateedge} and line~\ref{line23} in \alg{updatevertex} as the main sources of planning time.
For planning with expensive edge generation, it makes more sense to discuss the time complexity based on TOs. 

\begin{thm}\thlabel{micp_complexity}
Let $\mathcal{X}$ be the configuration space normalized to $[0,1]^{d}$, where $d \in \mathbb{N}$. Let $K, r$ be the number of intervals along each axis and the normalized distance calculated as $\ell_{\infty}$ (see \fig{edge}). Then, the TO-aware time complexity is $O(\left(2i+1\right)^dK^d)$ where $i = 0,1,\cdots, K$, $r = (1/K)i$.
\end{thm}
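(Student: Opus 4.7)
The plan is to show that the total number of TO invocations is bounded by counting the vertices and edges of the priori graph $\mathcal{G}$, and then arguing that, thanks to the lazy evaluation and the \text{CheckSameVertex}/\text{CheckSamePair} tests, each distinct vertex and each distinct edge triggers at most one call to $\operatorname{RunTO}$. Since the dominant cost of LTO is the TO solves on line~\ref{line2300} of \alg{updateedge} and line~\ref{line23} of \alg{updatevertex}, bounding these counts yields the claimed bound.

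\textbf{Step 1 (vertex count).} Partitioning $\mathcal{X}=[0,1]^d$ into voxels with $K$ intervals along each axis produces exactly $K^d$ voxels. By construction in Section~\ref{graph_structure} the algorithm places one vertex per voxel, hence $|V|=K^d$.

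\textbf{Step 2 (edge count via the connectivity radius).} Each voxel has side length $1/K$, and $r=(i/K)$. Two vertices are connected whenever their $\ell_{\infty}$ distance is at most $r$, which, given one vertex per voxel, is equivalent to the two voxels differing by at most $i$ steps along each of the $d$ axes. The number of voxels within $\ell_{\infty}$ distance $i$ of a given voxel is at most $(2i+1)^d$ (including the voxel itself), so the degree of every vertex is at most $(2i+1)^d-1$. Consequently $|E|\le \tfrac{1}{2} K^d\bigl((2i+1)^d-1\bigr)=O\bigl((2i+1)^d K^d\bigr)$.

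\textbf{Step 3 (at most one TO per unique vertex/edge).} Different parents can produce duplicate copies of the same configuration (the \text{parent}$(v')=v$ rule in \alg{expansion}), but before calling $\operatorname{RunTO}$, \alg{updatevertex} invokes \text{CheckSameVertex} on line~\ref{4400} and \alg{updateedge} invokes \text{CheckSamePair} on line~\ref{300}; whenever a previously resolved copy exists, the stored result is reused via $\operatorname{GetSameVertex}/\operatorname{GetSamePair}$. Therefore the number of genuine $\operatorname{RunTO}$ calls for vertex validation is at most $|V|=K^d$ and the number of calls for edge generation is at most $|E|=O((2i+1)^dK^d)$.

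\textbf{Step 4 (assembling the bound).} Summing the two contributions, the total number of TO solves is $O(K^d)+O((2i+1)^d K^d)=O\bigl((2i+1)^d K^d\bigr)$, which is exactly the stated TO-aware complexity. The main obstacle I anticipate is Step~3: making rigorous that the duplicate-handling logic indeed prevents a second $\operatorname{RunTO}$ call on the same voxel or same unordered voxel pair over the lifetime of the while loop. The bookkeeping arguments for Steps~1, 2 and~4 are routine once the voxel/connectivity geometry is spelled out.
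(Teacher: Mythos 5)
Your proposal is correct and follows essentially the same counting argument as the paper: $K^d$ voxels each yielding one vertex-validation TO, and at most $(2i+1)^d-1$ neighbors per vertex under the $\ell_\infty$ radius $r=i/K$ yielding $O((2i+1)^dK^d)$ edge-generation TOs. Your Step~3 (arguing via \text{CheckSameVertex}/\text{CheckSamePair} that duplicates do not inflate the count) makes explicit a point the paper leaves implicit, but it does not change the route of the proof.
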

\begin{proof}
\fig{edge} shows the case where $d=2, K=4, r=1/4, i = 1$.
The total number of vertices is $K^d$ so TO for finding a vertex is called at most $K^d$ times. 
For the edge generation, LTO connects the vertices if the $\ell_{\infty}$ between the center of the voxel to which one vertex belongs and the center of the voxel to which the other vertex belongs to is less than or equal to $r$. The total number of edges per vertex is $(2i+1)^d-1$ (the vertices inside the green rectangle in \fig{edge} except for the vertex (d)). 
Thus, the total number of TO to find the edge is  $((2i+1)^d-1)K^d$. 
%
%
\end{proof}
We can even bound $K$ when TO uses MICP. 
\begin{cor}\thlabel{corollcomplexity}
$K$ is bounded as:
$T_o(2^{B}K^d + 2^{NB(i+1)^d}((2i+1)^d-1))\leq T_t$
where $B$ is the maximum number of integer variables in a voxel, $T_o$ is the average solving time of convex programming on a problem domain with no integer variables and $N=0$, and $T_t$ is the acceptable running time.
\end{cor}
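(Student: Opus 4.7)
The plan is to combine \thref{micp_complexity}, which counts how many TO instances LTO invokes, with a worst-case bound on the per-instance solving time of MICP under Branch and Bound. The left-hand side of the claimed inequality has exactly the form ``(per-MICP time) $\times$ (number of MICPs)'' summed over the two kinds of calls (vertex validation and edge generation), so the strategy is to analyze each kind separately and then add the contributions.

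First I would bound the time of a single MICP solved by B\&B. At each node of the B\&B search tree, one convex relaxation is solved in time $T_o$ on average (which is the definition of $T_o$ given in the statement), and the tree has depth equal to the number of binary decision variables, giving at most $2^m$ leaves when there are $m$ binaries. Hence a MICP with $m$ binaries is solved in at most $T_o \cdot 2^{m}$ time. For vertex validation the domain is a single voxel with $N=0$, so at most $B$ binaries appear and the per-call time is $T_o \cdot 2^{B}$. For edge generation the MICP only sees constraints inside the hypercube of corner voxels spanned by the two endpoint voxels, and that hypercube contains at most $(i+1)^d$ voxels; together with horizon $N$ this gives at most $N B (i+1)^d$ binaries and a per-call time of $T_o \cdot 2^{N B (i+1)^d}$.

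Next I would multiply by the call counts from \thref{micp_complexity}, namely at most $K^d$ vertex MICPs and at most $((2i+1)^d-1)\,K^d$ edge MICPs, then impose the runtime budget. Adding the two contributions and requiring the total to be at most $T_t$ yields the bound
\begin{equation*}
T_o\bigl(2^{B} K^d + 2^{N B (i+1)^d}((2i+1)^d-1)\bigr) \le T_t,
\end{equation*}
which can be solved implicitly to upper-bound the admissible grid resolution $K$ in terms of $d, i, N, B, T_o, T_t$.

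The main obstacle, and the part I would be most careful about, is the $2^{m}$ per-MICP bound: it is a worst case that assumes no pruning, so I would state it as such and mention that in practice B\&B terminates much earlier. I would also double-check the accounting of binaries inside the edge hypercube, since using $(i+1)^d$ (the number of voxels actually containing the two endpoints) instead of $(2i+1)^d$ (the full neighborhood used when building the open list) is exactly the kind of place where an off-by-a-factor mistake could slip in; I would ground that count in the edge-construction rule described in Section \ref{graph_structure} and \fig{edge}.
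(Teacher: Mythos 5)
Your overall strategy is the same as the paper's: bound a single MICP call by the worst-case B\&B tree size $2^m$ times the per-node convex-solve time $T_o$, count $B$ binaries for a vertex call and $NB(i+1)^d$ for an edge call (via the $(i+1)^d$ voxels in the edge hypercube), and multiply by the call counts from \thref{micp_complexity}. The vertex half of your argument matches the paper exactly and gives $T_o 2^B K^d$.

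There is, however, a concrete gap in the edge half. You take the edge-call count to be $((2i+1)^d-1)K^d$, the total from \thref{micp_complexity}, so your own accounting yields an edge contribution of $T_o\,2^{NB(i+1)^d}((2i+1)^d-1)K^d$, which is $K^d$ times larger than the edge term in the stated bound. The step ``adding the two contributions \ldots yields the bound'' is therefore a non sequitur as written: the formula your counts produce and the formula you claim do not agree. The paper's own proof sidesteps this by counting edge generations per voxel (``the solver runs at most $(2i+1)^d-1$ per a voxel''), so its edge term is a per-voxel cost while its vertex term $2^B K^d T_o$ is a total --- an asymmetry you would need to either reproduce explicitly or call out, but you cannot reach the stated inequality from the total edge count without silently dropping a factor of $K^d$. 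Aside from this, your per-call binary accounting and the caveat that $2^m$ is a no-pruning worst case are consistent with the paper.
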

\begin{proof}
When finding a true vertex, the MICP solver using B\&B searches at most $2^B$ solutions and solves the regular convex programming for each solution. In the worst-case, it solves convex programs for all voxels and the solving time is $2^BK^dT_o$. Regarding the edge generation, the solver investigates  $(i+1)^d$ voxels at most for every single edge. When finding edges, we consider $N$ steps planning problem so that the total number of integer variables when finding an edge is $NB(i+1)^d$. Because the solver runs at most $(2i+1)^d-1$ per a voxel to find an edge, the worst-case solving time for edge generation is $2^{NB(i+1)^d}((2i+1)^d-1)$. 
\end{proof}








\subsection{Optimality}

We can bound the cost of the solution as follows:


\begin{thm}\thlabel{exact}
Let  $\xi_{}^{*}$ be an optimal path. 
LTO return a path $\xi$ with cost $c(\xi)\leq \alpha c(\xi_{}^{*})$ with $\alpha =  (1+\omega^{M})$ where $M=NB(i+1)^d$.
\end{thm}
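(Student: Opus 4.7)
My plan is to adapt the classical weighted-A* suboptimality argument to LTO's lazy-evaluation setting. The key pointwise bound is that for every edge $e$ the number of integer variables contributing to its generation satisfies $n_i(e)\leq M=NB(i+1)^d$ (\thref{corollcomplexity}), so for $\omega\geq 1$
\[
c(e)\;\leq\;c_{TO}(e)\;=\;(1+\omega^{n_i(e)})c(e)\;\leq\;(1+\omega^{M})c(e)\;=\;\alpha\,c(e).
\]
The lower-bound cost $c_{x,v}$ used on line~\ref{201} of \alg{expansion} is bounded above by $c$, so every numerical weight the algorithm ever records on an edge lies in $[0,\alpha\,c(e)]$.

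First, I would establish the lazy analogue of the A* invariant: at every iteration prior to $v_{\operatorname{goal}}$ being extracted, some state $v^*$ on an optimal path $\xi^*$ sits in $Q_o$ with $\hat g(v^*)\leq \alpha\,g^*(v^*)$, where $g^*$ denotes the true optimal cost-to-come. The inductive step inspects the moment a predecessor of $v^*$ along $\xi^*$ is expanded: since expansion in \alg{alg2} is guarded by TrueVertex and TrueEdge, and \alg{updateedge} rewrites $\hat g$ using the true cost $c$ just before allowing expansion, the parent's stored $\hat g$ equals its true $g^*$ value. Whichever of \alg{expansion}, \alg{updateedge}, or \alg{updatevertex} then pushes $v^*$ into $Q_o$ increments $\hat g$ by one of $c_{x,v}$, $c$, or $c_{TO}$, all bounded above by $\alpha\,c(\text{parent},v^*)$, yielding $\hat g(v^*)\leq g^*(\text{parent})+\alpha\,c(\text{parent},v^*)\leq \alpha\,g^*(v^*)$.

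Next, because $\hat h$ is the Euclidean distance and hence admissible, $\hat f(v^*)=\hat g(v^*)+\hat h(v^*)\leq \alpha\,g^*(v^*)+h^*(v^*)\leq \alpha\,c(\xi^*)$. Combining this with the main-loop termination condition $\hat f(v_{\operatorname{goal}})\leq \min_{v\in Q_o}\hat f(v)$ gives $\hat f(v_{\operatorname{goal}})\leq \alpha\,c(\xi^*)$. Since $v_{\operatorname{goal}}$ is only extracted after both TrueVertex and TrueEdge have been set along the chosen parent chain, its $\hat g$ value equals the true $c$-cost of the reconstructed path $\xi$ and $\hat h(v_{\operatorname{goal}})=0$, delivering the desired $c(\xi)=\hat g(v_{\operatorname{goal}})=\hat f(v_{\operatorname{goal}})\leq \alpha\,c(\xi^*)$.

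The main obstacle I expect is the bookkeeping caused by LTO maintaining several copies of the same configuration in $Q_o$ originating from different parents, each potentially carrying a different cost approximation ($c_{x,v}$, $c$, or $c_{TO}$). I have to verify that the dominance checks in \alg{expansion}, \alg{updateedge}, and \alg{updatevertex} never evict the particular copy of $v^*$ witnessing the invariant before a valid replacement is reinserted, and that the factor $\alpha$ does not compound multiplicatively across successive edges of $\xi^*$; both points ultimately reduce to the observation that a state's $\hat g$ is promoted to a \emph{true} $c$-cost at the instant of expansion, so the inflation is incurred at most on the single edge freshly appended to the current frontier.
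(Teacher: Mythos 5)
Your proposal is correct and follows essentially the same route as the paper's proof: the standard weighted-A* suboptimality induction establishing $\hat g(v)\leq \alpha g^{*}(v)$ along the optimal path, driven by the pointwise bound $c_{TO}(e)=(1+\omega^{n_i})c(e)\leq(1+\omega^{M})c(e)$ from $n_i\leq NB(i+1)^d$, combined with the admissibility/consistency of $\hat h$ and the main-loop termination test. You are somewhat more explicit than the paper on two details it glosses over --- the requirement $\omega\geq 1$ for the monotonicity $\omega^{n_i}\leq\omega^{M}$, and the fact that the underestimate $c_{x,v}\leq c$ on line~\ref{201} also respects the bound --- but the core argument is the same.
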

\begin{proof}
%
We need to show $\hat{g}(v) \leq \alpha g^{*}(v)$. 
We use induction. At the start of planning,  
$\hat{g}(v_{\operatorname{start}}) = g^{*}(v_{\operatorname{start}}) \leq \alpha g^{*}(v_{\operatorname{start}})$
so the base case holds. Next, after some iteration of \alg{alg2} and assume that  $\hat{g}(v) \leq \alpha g^{*}(v)$ holds for all $v \in \xi$ so far. 
Let $v_p \in \xi$ with $\hat{g}(v_p) > \alpha g^{*}(v_p)$, resulting in $\hat{f}(v_{p})>\alpha(g^{*}(v_{p}))+\hat h(v_p)$. $\hat{g}(v) \leq \alpha g^{*}(v)$ holds if no such $v_p$ exists. Here, we show that LTO will not choose such $v_p$ on line~\ref{extract} in \alg{alg2} even if $v_p$ exists. 

Case 1: A vertex has been expanded before $v_p$ along $\xi$. In this case, We must have a  
 $v_{a-1} \in \xi$ before $v_p$ along $\xi$ with successor $v_a$ on $Q_o$. If TrueEdge($v_a$) is true:

$\begin{aligned} \hat{g}(v_{a}) & \leq \hat{g}(v_{a-1})+c(v_{a-1}, v_{a}) \\ & \leq \alpha g^{*}(v_{a-1})+c(v_{a-1}, v_{a}) \leq \alpha g^{*}(v_{a}) \end{aligned}$\\
If TrueEdge($v_a$) is false:

$\begin{aligned} \hat{g}(v_{a}) & \leq \hat{g}(v_{a-1})+c_{TO}(v_{a-1}, v_{a}) \\
& \leq \omega^{n_i}  (g^{*}(v_{a-1})+c(v_{a-1}, v_{a})) \\
& \leq \omega^M  (g^{*}(v_{a-1})+c(v_{a-1}, v_{a}))  = \alpha g^{*}(v_{a}) \end{aligned}$\\
Hence, the assumption $\hat{g}(v) \leq \alpha g^{*}(v)$ holds true for all iterations. Since for every vertex $\alpha g^{*}\left(v_{i}\right)+\hat h\left(v_{i}\right)\leq \alpha g^{*}\left(v_{i+1}\right)+\hat h\left(v_{i+1}\right)$ is true due to the consistency of $\hat h$,\\
$\begin{aligned} \hat{f}\left(v_{a}\right) &=\hat{g}\left(v_{a}\right)+ \hat h\left(v_{a}\right) 
 \leq \alpha  (g^{*}(v_{a}))+\hat h(v_{a}) <\hat{f}\left(v_{p}\right) 
 \end{aligned}$\\
 which means that $v_p$ will not be chosen. 
 
 Case 2: No expanded vertex before $v_p$ along $\xi$. In this case, $Q_o$ must contain the start vertex, where we can apply the same discussion above, resulting in $\hat f(v_{\operatorname{start}})<\hat f(v_p)$.
 
 Finally, 
 $c(\xi) = \hat g(v_{\operatorname{goal}}) \leq \alpha g^{*}(v_{\operatorname{goal}}) \leq \alpha c(\xi_{}^{*})$.
\end{proof}

\section{Numerical Experiments}
We validate LTO on two motion planning problems: free-flying robots in $\mathbb{R}^{2}$ and legged robots in $\mathbb{R}^{21}$. 
We test algorithms with ten trials.

We evaluate LTO of the different numbers of voxels with/without the warm-start. We set $r$ such that a vertex has 15 edges per vertex for the free-flying robot experiments and 20 edges for the legged robot experiments. 
With warm-start options, LTO uses other already investigated trajectories by TO if $d_{\operatorname{cost}}$ in configuration space is less than 0.1. 
We also run the regular TO (i.e., no GSP is embedded), other GSP (i.e., weighted A*, LWA*), and SBP (i.e., PRM \cite{PRM}, lazyPRM \cite{LazyPRM}, RRT \cite{RRT}, RG-RRT \cite{RG-RRT}). 
To have a fair comparison, 
we incorporate TO into GSPs and SBPs. 
When they find a node and connect nodes, they use TO to check if the sampled configuration and the edge is feasible. 

We use Gurobi \cite{gurobi} to solve MICP on Intel Core i7-8750H machine and implement all planning codes in Python. 

\begin{figure}
    \centering
    \includegraphics[width=0.4865795\textwidth, clip]{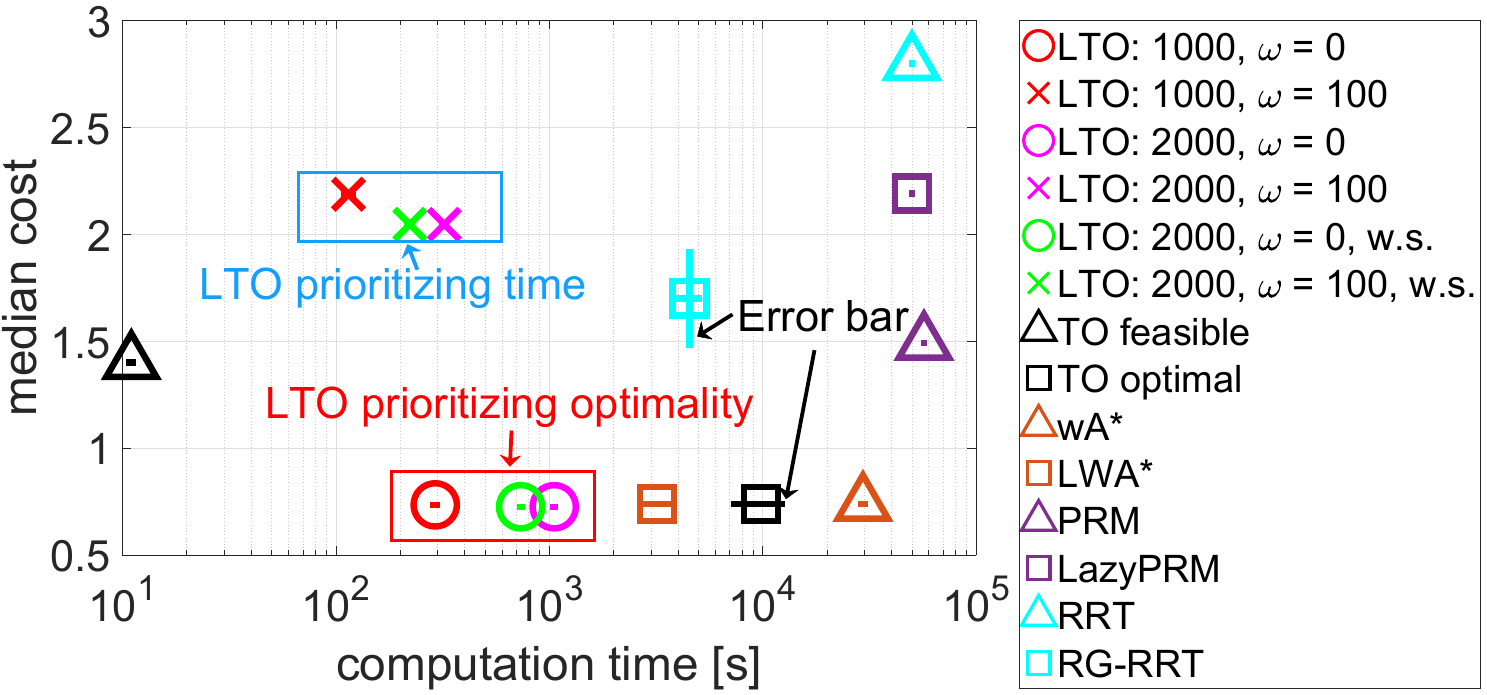}
    \caption{The results of Section~\ref{free-fly-section}. Error bars represent a 95 \% confidence interval for a Gaussian distribution. Note that for some algorithms, the confidence intervals are very small and are not visible. Compared with TO, LTO prioritizing optimality (i.e., $\omega=0$) finds the optimal solution about nine times faster without sacrificing the solution cost much (1.2 $\%$ worse).}
    \label{2d_fraction}
\end{figure}

\subsection{Free-Flying Robots}\label{free-fly-section}

We consider a free-flying robot in $\mathbb{R}^{2}$ with multi obstacles. We define $p_{t} \in \mathbb{R}^{2}$ as the position and $v_{t} \in \mathbb{R}^{2}$ as the velocity. The state $x_t = (p_t,v_t)$ is controlled by $u_t\in \mathbb{R}^{2}$. Thus, the robot solves the following MICP from $x_{s}$ to $x_{g}$ while remaining in the safe region $\mathcal{X}_{\text {safe}}$ \cite{free-fly}:
\begin{equation}
\begin{array}{cl}
\text {minimize} & \sum_{t=0}^{N-1}\left(x_{t}-x_{g}\right)^{\top}Q\left(x_{t}-x_{g}\right)+u_{t}^{\top}u_{t}\\
\text {s.t.} & x_{t+1}=A x_{t}+B u_{t},  t=0, \ldots, N-1\\
& \left\|u_{t}\right\|_{2} \leq u_{\max }, \quad t=0, \ldots, N-1\\
& x_{\min } \leq x_{t} \leq x_{\max }, \quad t=0, \ldots, N\\
& x_{0}=x_{s}, \quad  x_{N}=x_{g}\\
& x_{t} \in \mathcal{X}_{\text {safe}}, \quad t=0, \ldots, N\label{space_craft}
\end{array}
\end{equation}
The constraints $x_{t} \in \mathcal{X}_{\text {safe}}$ are represented using a big-M formulation with binary variables $z$ \cite{free-fly}.
We consider axis-aligned rectangular obstacles. 
We run LTO under 1000 and 2000 voxels with  $\omega=0, 100$ and with/without a warm-start. We set $N=7$ for TO inside LTO. For TO, we use $N=70$, which is the minimum number of $N$ for us to find a solution. The number of continuous variables, binary variables, and constraints is 168, 3360, 4230, respectively. 
We run SBP and GSP at most 5000 samples with different grid sizes and show the results for the grid size showing the optimal cost. 



The solution cost versus planning time is plotted in \fig{2d_fraction}. LTO finds as good solutions as TO finds with decreased planning time. 
The generated trajectories are shown in \fig{concept}. By navigating the robot to a region with fewer integer variables (i.e., fewer obstacles), LTO can generate robot trajectories quickly. 
We also observe that the lower the inflation factor $\omega$ is, the more optimal trajectory LTO generates.
It takes more time to design the trajectory since LTO does not guide the robot to avoid the computationally expensive regions, but it still quickly generates the trajectory without sacrificing the solution cost so much compared with the long-horizon TO.

\begin{figure}
    \centering
    \includegraphics[width=0.325\textwidth, clip]{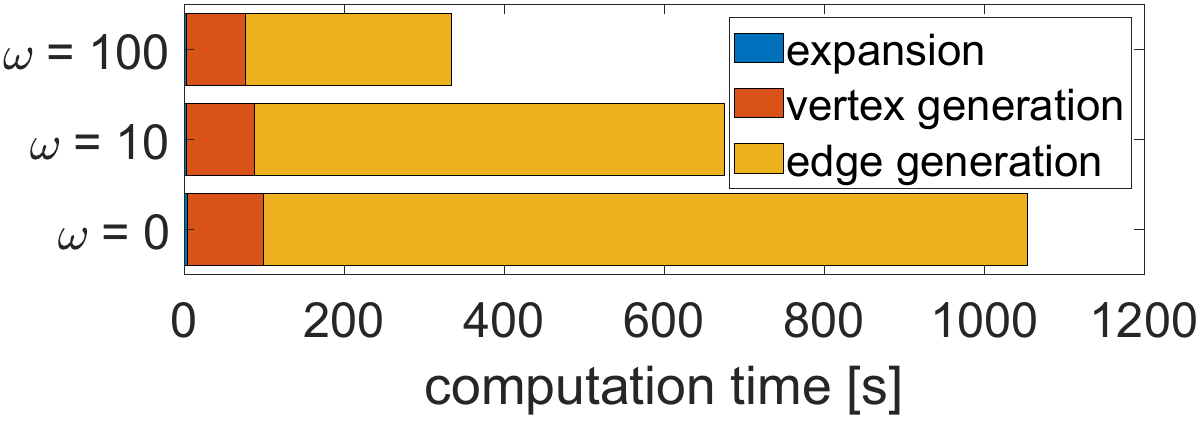}
    \caption{Consumed time with 2000 voxels in $\mathbb{R}^{2}$ for LTO. The larger the inflation factor $\omega$ is,
    the less time LTO spends by avoiding expensive edge generation.}
    \label{time_comparison}
\end{figure}

\begin{figure*}[!t]
    \centering
    \includegraphics[width=0.813\textwidth, clip]{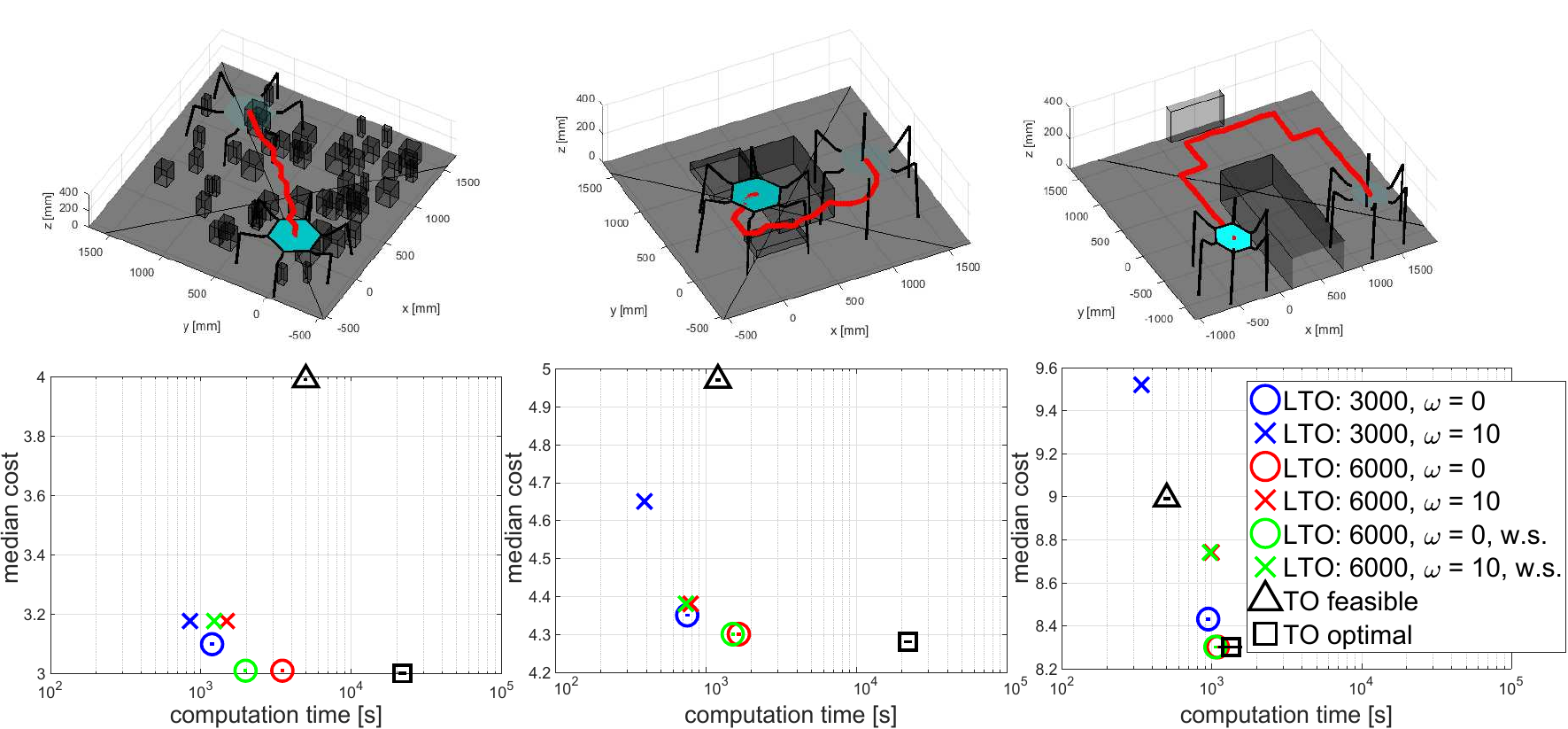}
    \caption{The results of Section~\ref{legged_robot_result_section} in $\mathbb{R}^{21}$. The red lines in the top figure indicate the body trajectory. Error bars represent a 95 \% confidence interval for a Gaussian distribution. Note that for LTO, the confidence intervals are very small and are not visible. LTO with 6000 voxels with the warm-start finds the optimal solution about 11.3, 14.0, 1.4 times faster than TO without degrading the solution cost so much (about 0.33, 0.35, 0.01 $\%$ worse), from the left to the right environment, respectively. By increasing the inflation factor $\omega$, LTO can generate globally suboptimal trajectories faster than TO feasible option.}
    \label{legged_robot_fig}
\end{figure*}

Here, we compare the results of LTO with TOs. 
While the best planner among LTOs (the warm-start, 2000 voxels, $\omega=0$) designs the trajectory that cost is 1.2 \% worse than TO with a cost function, it decreases the computation time by about 93 \%. 
TO with $N=70$ is the simplest TO we get the feasible global trajectories. We manually increase $N$ until we get the feasible trajectory. 
In contrast, since LTO iteratively solves the short-horizon TO, 
we do not need to spend time tuning $N$ until we get global feasible trajectories, resulting in less offline user time consumption.
Also, the variance of the planning time is smaller than that of TO.
Recognizing that the solver's behavior in terms of solving time is more uncertain for the large scale optimization problem \cite{Variability}, LTO uses the small scale problem (i.e., short-horizon TO) to have the small variance of the planning time. 


We also evaluate our algorithms in terms of parameters. \fig{2d_fraction} shows 
that $\omega$ provides a tuning knob to trade off between the solution cost and the planning time. 
\fig{time_comparison} shows the individual time cost on \alg{expansion}-\alg{updatevertex} within LTO with different $\omega$ for 2000 voxels. It shows that TO for generating an edge spends a large amount of time. It also indicates that by increasing $\omega$, LTO spends less time to generate edge by navigating the robot to the region with fewer discrete variables, resulting in less total planning time.
As for the number of voxels, 
 LTO with 1000 voxels and $\omega=0$ designs the trajectory, resulting in 72 \% decreased planning time with the 0.4 \% worse solution cost compared with LTO with 2000 voxels and $\omega=0$.
As a result, we may use the solution from LTO with 1000 voxels and $\omega=0$. 


All algorithms in SBP and GSP but RG-RRT spends a large amount of time. While LTO and TO have the success rate of 100 $\%$, SBP and GSP have the success rate of 20 $\%$ except for RG-RRT, which has that of 40 $\%$.
This is because the regular SBPs and GSPs do not consider dynamics.

\subsection{Legged Robots}\label{legged_robot_result_section}

We consider a $M$-legged robot motion planning problem. We denote the body position as $q_{t} \in \mathbb{R}^{3}$, its orientation as $\theta_{t} \in \mathbb{R}^{3}$, and toe $i$ position as $p_{it} \in \mathbb{R}^{3}$. To realize a stable locomotion, we consider the reaction force $f_{it}^{r} \in \mathbb{R}^{3}$  at foot $i$. Thus, the robot solves the following MICP \cite{MIT_envelope}: 
\begin{equation}
\begin{array}{cl}
\text { minimize } & \sum_{t=0}^{N-1}\left(x_{t}-x_{g}\right)^{\top}Q\left(x_{t}-x_{g}\right)\\
\text {s.t.} & x_{\min } \leq x_{t} \leq x_{\max }, \quad t=0, \ldots, N\\
&  |\Delta x_{t} | \leq \Delta x, \quad t=1, \ldots, N\\
& {p}_{it} \in \mathcal{R}_{i}({q_t}, {\theta_t}), \quad t=0, \ldots, N\\
& x_{0}=x_{s}, \quad  x_{N}=x_{g}\\
& x_{t} \in \mathcal{X}_{\text {safe}}, \quad t=0, \ldots, N\\
&\sum_{i=1}^{M} f_{it}^{r}+{F}_{t o t}={0}\\
& \sum_{i=1}^{M}\left({p}_{it} \times {f}_{it}^{r}\right)+{M}_{t o t}={0}
\label{legged_robot}
\end{array}
\end{equation}
where $x_t$ contains kinematics-related decision variables $q_t, \theta_t, p_{1t}, \cdots, p_{Mt}$. Here, ${p}_{it} \in \mathcal{R}_{i}({q_t}, {\theta_t})$ shows kinematics constraints. $\sum_{i=1}^{M} f_{it}^{r}+{F}_{t o t}={0}$ and $\sum_{i=1}^{M}\left({p}_{it} \times {f}_{it}^{r}\right)+{M}_{t o t}={0}$ represent the static equilibrium of force and moment constraints, respectively. For kinematics constraints, we approximate them as linear constraints \cite{ETHNLP}. 
The bilinear terms of the static equilibrium of moment can be represented as piecewise McCormick envelopes with binary variables $z$ \cite{MIT_envelope}. 
%
In this work, we just consider $q_t, p_{it}$ for planning. To consider orientation, we may utilize \cite{tyler}-\cite{multi}.  We consider a six-legged robot with 3 DOF per leg, resulting in 21 DOF planning.
We run LTO under 3000 and 6000 voxels with $\omega=0, 10$ and with/without a warm-start. We set $N=7$ for TO inside LTO. For TO, we use $N=56, 63, 70$ from the left to the right environment in \fig{legged_robot_fig}, respectively, which are the minimum number of $N$ for us to find a solution. The number of continuous variables is 20220, 23106, 25674, the number of binary variables is 32400, 14190, 6754, and the number of constraints is 74680, 55827, 48532, from the left to the right environments, respectively.


The generated trajectories and the solution cost versus planning time are shown in \fig{legged_robot_fig}. In the left and middle environments, LTO shows better performance compared with TO. In \fig{2d_fraction}, TO feasible planning finds the trajectory most quickly, but \fig{legged_robot_fig} shows that it spends more time to just find a feasible solution if the planning problem is more difficult. In the right environment, TO optimal and LTO show similar results. Since the right environment has fewer discrete variables, we do not observe the advantage of using LTO. Therefore, LTO works well in environments with many discrete decision variables like the left and the middle environments.

\section{Conclusion}

We presented LTO for high DOF robots in cluttered environments. Because LTO deeply unifies TO and GSP algorithms, it can consider the original long-horizon TO problem with a decreased planning time. We proposed a TO-aware cost function that considers the difficulty of TO.
Furthermore, LTO employs other edges in the graph as a warm-start to accelerate the planning process.
We also presented proofs of the complexity and optimality. Finally, we performed planning experiments of a free-flying robot and a legged robot motion planning problems, showing that LTO is faster with a small variance of the planning time.

Since LTO has a small variance in planning time, we argue that it would be useful for safety-critical applications such as autonomous driving. Additionally, it consists of TO and GSP so that users can use other planning algorithms for each subcomponent according to their specifications.


\begin{thebibliography}{99}

\bibitem{sample_constraints}Z. Kingston, M. Moll, and L. E. Kavraki, “Sampling-based methods for motion planning with constraints,” \emph{Annu. Rev. Control, Robot. Auton. Syst.}, vol. 1, pp. 159-185, 2018. 

\bibitem{Risk} Y. Shirai, X. Lin, Y. Tanaka, A. Mehta, and D. Hong, "Risk-aware motion planning for a limbed robot with stochastic gripping forces using nonlinear programming," \emph{IEEE Robot. Autom. Lett.}, vol. 5, no 4, pp. 4994-5001, 2020.
\bibitem{MILP} Y. Kuwata and J. P. How, "Cooperative distributed robust trajectory optimization using receding horizon MILP," \emph{IEEE Trans. Cont. Sys. Tech.}, vol. 19, no 2, pp. 423-431, 2011.

\bibitem{Deits}\textcolor{black}{ R. Deits and R. Tedrake, “Footstep planning on uneven terrain with mixed-integer convex optimization,” in  \emph{Proc. 2014 IEEE-RAS Conf.  Humanoid Robots}, pp. 279-286, 2014.}

\bibitem{fast_legg_cmu} J. Norby and A. M. Johnson, "Fast Global Motion Planning for Dynamic Legged Robots," in \emph{Proc. 2020 IEEE/RSJ Conf. Intell. Robots Syst.}, pp. 3829-3836, 2020.

\bibitem{ETHNLP} A. W. Winkler, C. D. Bellicoso, M. Hutter, and J. Buchli, “Gait and trajectory optimization for legged systems through phase-based end-effector parameterization,”  \emph{IEEE Robot. Autom. Lett.}, vol. 3, no. 3,  pp. 1560–1567, 2018. 

\bibitem{robust_gait}\textcolor{black}{B. Aceituno-Cabezas et al., “Simultaneous contact, gait, and motion planning for robust multilegged locomotion via mixed-integer convex optimization,” \emph{IEEE Robot. Autom. Lett.}, vol. 3, no 3, pp. 2531–2538, 2018.}


\bibitem{graph_TO} R. Natarajan, H. Choset, and M. Likhachev, "Interleaving Graph Search and Trajectory Optimization for Aggressive Quadrotor Flight," \emph{IEEE Robot. Autom. Lett.}, 2021.

\bibitem{navigation_ETH} M. Wermelinger et al., "Navigation planning for legged robots in challenging terrain," in \emph{Proc. 2016 IEEE/RSJ Conf. Intell. Robots Syst.}, pp. 1184-1189, 2016.

\bibitem{risk_hackett} J. Hackett, W. Gao, M. Daley, J. Clark and C. Hubicki, "Risk-constrained motion planning for robot locomotion: formulation and running robot demonstration," in \emph{Proc. 2020 IEEE/RSJ Conf. Intell. Robots Syst.}, pp. 3633-3640, 2020.

\bibitem{stochastic_gatech} L. Drnach and Y. Zhao, "Robust trajectory optimization over uncertain terrain with stochastic complementarity,"  \emph{IEEE Robot. Autom. Lett.}, vol. 6, no. 2, pp. 1168-1175, 2021.


\bibitem{MIT_envelope}\textcolor{black}{A. K. Valenzuela, “Mixed-integer convex optimization for planning aggressive motions of legged robots over rough terrain,”   \emph{PhD Thesis}, Massachusetts Institute of Technology, 2016.}

\bibitem{Yuki_lunar} Y. Shirai, Y. Minote, K. Nagaoka, and K. Yoshida, "Gait analysis of a free-climbing robot on sloped terrain for lunar and planetary exploration", in \emph{Proc. Int. Symp. Artif. Intell. Robot. Autom.}, 2018.

\bibitem{xuanMICP} X. Lin, J. Zhang, J. Shen, G. Fernandez, and D. W. Hong, "Optimization based motion planning for multi-limbed vertical climbing robots," in \emph{Proc. 2019 IEEE/RSJ Conf. Intell. Robots Syst.}, pp. 1918-1925, 2018.

\bibitem{Uno} K. Uno et al., "Gait planning for a free-climbing robot based on tumble stability," in \emph{Proc. 2019 IEEE/SICE Int. Symp. Syst. Integration}, pp. 289-294, 2019.


\bibitem{FASTER} J. Tordesillas, B. T. Lopez and J. P. How, "FASTER: Fast and safe trajectory planner for flights in unknown environments," in \emph{Proc. 2019 IEEE/RSJ Conf. Intell. Robots Syst.}, pp. 1934-1940, 2019. 



\bibitem{IntroTO}M. Kelly, “An introduction to trajectory optimization: how to do your own direct collocation,” \emph{SIAM Rev.}, vol. 59, no. 4,  pp. 849-904, 2017. 

\bibitem{MIT_MPC} J. D. Carlo1 et al., "Dynamic locomotion in the MIT cheetah 3 through convex model-predictive control," in \emph{Proc. 2018 IEEE/RSJ Conf. Intell. Robots Syst.}, pp. 7440-7447, 2018.  

\bibitem{vehicle_mpc}J. Ji, A. Khajepour, W. W. Melek and Y. Huang, "Path planning and tracking for vehicle collision avoidance based on model predictive control with multiconstraints," \emph{IEEE Trans. Veh. Tech.}, vol. 66, no. 2, pp. 952-964, 2017.

\bibitem{hae_park} H. Park, P. M. Wensing, and S. Kim, “Online planning for autonomous run- ning jumps over obstacles in high-speed quadrupeds,” in \emph{Proc. 2015 Robot.: Sci. Syst. Conf}, 2015.

\bibitem{leg_MPC}F. Farshidian et al., “Real-time motion planning of legged robots: A model predictive control approach,” in \emph{Proc. 2017 IEEE-RAS Int. Conf. Humanoid Robot.}, pp. 577–584, 2017.


\bibitem{fast_MPC}Y. Chen, H. Peng and J. W. Grizzle, "Fast trajectory planning and robust trajectory tracking for pedestrian avoidance," \emph{IEEE Access}, vol. 5, pp. 9304-9317, 2017.

\bibitem{LWA*}B. J. Cohen, M. Phillips, and M. Likhachev, “Planning single-arm manipulations with n-arm robots,” in \emph{Proc. 2014 Robot.: Sci. Syst. Conf.}, 2014. 
\bibitem{LazySP}C. Dellin and S. Srinivasa, “A unifying formalism for shortest path problems with expensive edge evaluations via lazy best-first search over paths with edge selectors,” in \emph{Proc. 2016 Int. Conf. Autom. Plan. Sched.}, pp. 459–467, 2016.

\bibitem{general_lazy} A. Mandalika, S. Choudhury, O. Salzman, and S. Srinivasa, "Generalized lazy search for robot motion planning: interleaving search and edge evaluation via event-based toggles", in \emph{Proc. 2019 Int. Conf. Autom. Plan. Sched.}, pp. 745-753, 2019.

\bibitem{general_lazy_icra} B. Hou, S. Choudhury, G. Lee, A. Mandalika and S. S. Srinivasa, "Posterior sampling for anytime motion planning on graphs with expensive-to-evaluate edges," in \emph{Proc. 2020 Int. Conf. Robot. Automat.}, pp. 4266-4272, 2020.









\bibitem{LRH} A. Mandalika, O. Salzman, and S. Srinivasa, "Lazy receding horizon A* for efficient path planning in graphs with expensive-to-evaluate edges," in \emph{Proc. 2018 Int. Conf. Autom. Plan. Sched.}, pp. 476–484, 2018.


\bibitem{SD} B. Saund and D. Berenson, "Fast planning over roadmaps via selective densification," \emph{IEEE Robot. Autom. Lett.}, vol. 5, no 2, pp. 2873-2880, 2020.

\bibitem{hierarchical1} T. Dang et al., "Graph‐based subterranean exploration path planning using aerial and legged robots," \emph{J. Field Robot.}, vol. 37, pp. 1363-1388, 2020.

\bibitem{hierarchical2} B. Park, J. Choi and W. K. Chung, "An efficient mobile robot path planning using hierarchical roadmap representation in indoor environment," in \emph{Proc. 2012 Int. Conf. Robot. Automat.}, pp. 180-186, 2012.

\bibitem{branch} R. Fletcher and S. Leyffer, “Numerical experience with lower bounds for miqp branch-and-bound,” \emph{SIAM Optimization}, vol. 8, no. 2,  pp. 604-616, 1998. 
\bibitem{branch2} M. Conforti, G. Cornuejols, and G. Zambelli, “Integer programming,” \emph{Springer}, vol. 271, 2014. 

\bibitem{warm_tedrake}T. Marcucci and R. Tedrake, "Warm start of mixed-integer programs for model predictive control of hybrid systems," \emph{IEEE Trans. Autom. Cont.}, doi: 10.1109/TAC.2020.3007688, 2020.

\bibitem{LVIS} R. Deits, T. Koolen, and R. Tedrake, "LVIS: Learning from value function intervals for contact-aware robot controllers," in \emph{Proc. 2019 IEEE Int. Conf. Robot. Autom.}, pp. 7762-7768, 2019.




\bibitem{UAV_hybrid} L. Campos-Macías, D. Gómez-Gutiérrez, R. Aldana-López, R. de la Guardia and J. I. Parra-Vilchis, "A hybrid method for online trajectory planning of mobile robots in cluttered environments," \emph{IEEE Robot. Autom. Lett.}, vol. 2, no 2, pp. 935-942, 2017.
\bibitem{opt_collision} X. Zhang, A. Liniger and F. Borrelli, "Optimization-Based Collision Avoidance," \emph{IEEE Trans. Cont. Sys. Tech.}, doi: 10.1109/TCST.2019.2949540, 2019.
\bibitem{wheel_hybrid} K. Bergman, O. Ljungqvist and D. Axehill, "Improved path planning by tightly combining lattice-based path planning and optimal control," \emph{IEEE Trans. Intell. Veh}, vol. 6, no. 1, pp. 57-66, 2021.





 
\bibitem{Ponton} B. Ponton, A. Herzog, S. Schaal and L. Righetti, “A convex model of humanoid momentum dynamics for multi-contact motion generation,” in  \emph{Proc. 2016 IEEE-RAS Conf.  Humanoid Robot.}, pp. 842-849, 2016.


\bibitem{PRM}L. E. Kavraki, P. Svestka, J. C. Latombe, and M. H. Overmars, “Probabilistic roadmaps for path planning in high-dimensional configuration spaces,” \emph{IEEE Trans. Robot. Autom.}, vol. 12, no. 4, pp. 566-580, 1996. 
\bibitem{LazyPRM}R. Bohlin and L. E. Kavraki, “Path planning using lazy PRM,” in \emph{Proc. 2000 Int. Conf. Robot. Automat.}, pp. 521-528, 2000. 
\bibitem{RRT}S. M. LaValle, “Rapidly-exploring random trees: A new tool for path planning,”  \emph{Tech. Rep.}, Dept. Comput. Sci., Iowa State Univ., TR 98–11, 1998.
\bibitem{RG-RRT} A. Shkolnik, M. Levashov, I. R. Manchester, and R. Tedrake, “Bounding on rough terrain with the LittleDog robot,” \emph{Int. J. Rob. Res.}, vol. 30, issue 2, pp. 192–215, 2011.


\bibitem{gurobi}Gurobi Optimization, LLC,  \emph{Gurobi Optimizer Reference Manual}, 2020, URL: https://www.gurobi.com/. 
\bibitem{free-fly} T. Schouwenaars, B. De Moor, E. Feron, and J. How, “Mixed integer
programming for multi-vehicle path planning,” in \emph{Proc. 2001 Eur. Control Conf.}, pp. 2603-2608, 2001. 
\bibitem{Variability} T. Koch et al., “Miplib 2010,”  \emph{Math. Program. Comp.}, vol. 3, no 2, pp. 103-163, 2011. 


\bibitem{tyler} J. T. Thayer and W. Ruml, "Bounded suboptimal search: a direct approach using inadmissible estimates," in \emph{Proc. 2011 Int. Joint Conf. Artif. Intell}, vol. 1, pp. 674-679, 2011.

\bibitem{multi} S. Aine et al., “Multi-Heuristic A*,”  \emph{Int. J. Rob. Res.}, vol. 35, issue 1-3, pp. 224-243, 2015. 

\end{thebibliography}
\end{document}